\documentclass[sn-mathphys-num]{sn-jnl}

\usepackage{graphicx}\usepackage{multirow}\usepackage{amsmath,amssymb,amsfonts}\usepackage{amsthm}\usepackage{mathrsfs}\usepackage[title]{appendix}\usepackage{xcolor}\usepackage{textcomp}\usepackage{manyfoot}\usepackage{booktabs}\usepackage{listings}
\usepackage{tikz}
\usetikzlibrary{calc}
\usetikzlibrary{backgrounds,positioning}
\usetikzlibrary{decorations.pathreplacing}

\usepackage{diagbox}
\usepackage{comment}

\usepackage{svg}

\usepackage{xcolor}
\usepackage[linesnumbered,ruled,vlined,procnumbered,lined,boxed]{algorithm2e}
\usepackage{etoolbox}

\makeatletter
\AtBeginEnvironment{procedure}{\let\c@algocf\c@procedure}
\usepackage{amsmath}

\SetCommentSty{mycommfont}

\theoremstyle{thmstyleone}
\theoremstyle{thmstyletwo}
\theoremstyle{thmstylethree}
\newtheorem{lemma}{Lemma}

\raggedbottom

\begin{document}

\title[Adaptive Opponent Policy Detection in Multi-Agent MDPs: Real-Time Strategy Switch Identification Using Running Error Estimation]{Adaptive Opponent Policy Detection in Multi-Agent MDPs: Real-Time Strategy Switch Identification Using Running Error Estimation}

\author[1]{\fnm{Mohidul Haque Mridul}}\email{mohidulhaque-2018125308@cs.du.ac.bd}

\author[1]{\fnm{Mohammad Foysal} \sur{Khan}}\email{mohammadfoysal-2018225299@cs.du.ac.bd}

\author*[1]{\fnm{Redwan Ahmed} \sur{Rizvee}}\email{rizvee@cse.du.ac.bd}

\author*[1]{\fnm{Md Mosaddek} \sur{Khan}}\email{mosaddek@du.ac.bd}

\affil[1]{\orgdiv{Department of Computer Science and Engineering}, \orgname{University of Dhaka}, \orgaddress{\state{Dhaka}, \country{Bangladesh}}}

\abstract{In Multi-agent Reinforcement Learning (MARL), accurately perceiving opponents' strategies is essential for both cooperative and adversarial contexts, particularly within dynamic environments. While Proximal Policy Optimization (PPO) and related algorithms such as Actor-Critic with Experience Replay (ACER), Trust Region Policy Optimization (TRPO), and Deep Deterministic Policy Gradient (DDPG) perform well in single-agent, stationary environments, they suffer from high variance in MARL due to non-stationary and hidden policies of opponents, leading to diminished reward performance. Additionally, existing methods in MARL face significant challenges, including the need for inter-agent communication, reliance on explicit reward information, high computational demands, and sampling inefficiencies. These issues render them less effective in continuous environments where opponents may abruptly change their policies without prior notice. Against this background, we present OPS-DeMo (Online Policy Switch-Detection Model), an online algorithm that employs dynamic error decay to detect changes in opponents' policies. OPS-DeMo continuously updates its beliefs using an Assumed Opponent Policy (AOP) Bank and selects corresponding responses from a pre-trained Response Policy Bank. Each response policy is trained against consistently strategizing opponents, reducing training uncertainty and enabling the effective use of algorithms like PPO in multi-agent environments. Comparative assessments show that our approach outperforms PPO-trained models in dynamic scenarios like the Predator-Prey setting, providing greater robustness to sudden policy shifts and enabling more informed decision-making through precise opponent policy insights.}

\keywords{Online Algorithm, Dynamic Environment, Collaborative-Competitive Scenario, Dynamic Decay}

\maketitle

\section{Introduction}
In real-world scenarios, artificial agents face significant challenges in navigating interactions among multiple entities, a task that humans manage with apparent ease \cite{humandigitaltwin}. Examples include soccer players who must predict the movements of teammates, each with varying roles and skill sets, as well as opponents, and autonomous cars that need to anticipate the diverse behaviors of other vehicles on the road \cite{carautonomy}. These encounters involve distinct behaviors, necessitating different policies for optimal outcomes. Termed opponents, these entities create a non-stationary environment from a decentralized agent's perspective.

In normal situations, it is common for the strategies of opponents or interacting entities to be hidden. Despite the potential benefits of exchanging policies, frequent communication is not always feasible, and opponents may be reluctant to share their strategies, hindering direct learning opportunities \cite{policyconcealment}. Consequently, there is a need to update our beliefs about opponents' policies by observing their actions in real-time. However, relying solely on past observations can be problematic, as these may not accurately reflect the current policy owing to random fluctuations or adaptations in the opponent's learning process. By placing greater emphasis on recent actions, we can gradually converge toward understanding their true current policy. However, abrupt shifts in opponent policies can occur, leading to challenges in accurately tracking their strategies. Failing to detect these sudden changes could slow down learning progress and hinder the agent's ability to adapt effectively. An example can be observed in analyzing the financial market, where, the policies of the players are often not expressed explicitly, and based on various factors from small to large, the environment of the market can abruptly shift.

Interacting effectively with other agents requires understanding their actions and decisions, a process aided by opponent modeling - constructing and using models of opponents' behavior. Ideally, opponent models help extract opponent policies, aiding in devising strategies exploiting opponents' weaknesses. However, due to the dynamic nature of agents' behaviors, such modeling is challenging. Non-stationary behaviors require continuous updating of learned opponent models. For example, in soccer, a defender can become an attacker during a counter-attack, altering their strategy significantly. Similarly, autonomous agents may adjust their policies based on changing beliefs about the environment.

To address the aforementioned concerns, Reinforcement learning, a paradigm for policy learning has been used that focuses on agents' maximizing long-term cumulative reward through trial-and-error with the environment \cite{Andrew_1999}. While effective in single-agent environments, it struggles in multi-agent settings due to the non-stationarity of agents. Common reinforcement learning algorithms such as DQN, DDPG, AAC, and PPO are optimized to excel in achieving high rewards within static environments known as Markov Decision Processes (MDP). However, when applied to Multi-Agent MDPs, where multiple agents interact, these algorithms encounter challenges due to increased variance caused by gradual or sudden changes in other agents' policies.

Several algorithms tailored for multi-agent scenarios, such as BPR+\cite{bprplus}, DPN-BPR+\cite{dpnbprplus}, LOLA\cite{LOLA}, and meta-MAPG\cite{metamapg}, address the challenges of non-stationary environments through various strategies. LOLA aims to influence opponents' behavior, while Meta-MAPG focuses on meta-learning environmental dynamics. BPR+ and DPN-BPR+ leverage previously learned response policies, assuming infrequent non-stationarity in opponents, akin to occasional shifts between stationary opponents. However, LOLA and Meta-MAPG are inadequate for sudden shifts in opponent policy. In contrast, BPR+ and DPN-BPR+ rely on reward signals to detect opponent policy shifts, limiting their effectiveness to episodic environments with consistent reward data, posing challenges in continuous environments for accurately detecting sudden policy switches.

The need for an effective method to detect policy switches based solely on observed actions is critical, particularly when traditional statistical measures fall short in scenarios with brief opponent trajectories.  SAM (Switching Agent Model) addresses this by estimating the running error of the assumed policy, preventing an indefinite error increase when the agent follows the assumed policy. However, SAM is tailored for deterministic actions, often paired with DDPG \cite{ddpg}, and lacks detailed formulations for running error decay.

Building upon the SAM framework, our approach extends its applicability to Proximal Policy Optimization (PPO) and similar reinforcement-learning algorithms. We rectify the gaps in SAM by detailing the decay calculations and enhancing the method for selecting response policies. Specifically, we propose a method to detect policy shifts in real-time by reusing response policies on a fixed set of opponent policies, assuming occasional switches between them. This method relies solely on observed actions, employing running error estimation and dynamic error decay adaptable to stochastic policies. Upon detecting a switch, our algorithm swiftly adjusts the response policy to align with the most probable opponent policy. To accommodate the resource constraints typical in edge devices, we optimize our policy switch detection mechanism for efficiency, allowing it to operate within strict resource limitations and process observations on the fly, without the need for storing them.

To summarize the main contribution of this work, we highlight the following points,

\begin{enumerate}
    \item
We introduce a \textit{running error estimation metric} to assess an agent's compliance with a stochastic policy, utilizing \textit{only observed state-action pairs} from the trajectory. This metric operates \textit{online} and can be continually updated as new observations are processed, \textit{without the need for storage}.
\item Additionally, we propose an online algorithm that utilizes the running error estimation metric to detect the policy switch of an opponent and adapt the response policy accordingly.

\item Through conducting a rigorous comparative analysis between our proposed algorithm and the current state-of-the-art algorithms, we evaluate the merit of the proposals and present them.
\end{enumerate}

In Section \ref{sec:background}, we present a summary of our related works. In Section \ref{sec:proposals}, we present our proposals and an experimental evaluation regarding the merit of the work is presented in Section \ref{sec:evaluation}. The article ends with some concluding marks in Section \ref{sec:conclusion}.
 \section{Related Works} \label{sec:background}

In this section, we explore topics closely related to our work, beginning with a fundamental concept: the Markov Decision Process (MDP), which models decision-making under uncertainty by defining states, actions, transition probabilities, and rewards. It assumes the Markov property, where future states depend only on the current state and action. The goal is to find a policy that maximizes expected cumulative rewards. In the reinforcement learning paradigm, MDPs are often modeled under single agent-based framework. Markov Games are an extension of MDP which also share a similar set of terminologies and objectives, such as states, actions, transition probabilities, and rewards in conjunction with maximizing the cumulative reward \cite{markovgames}.

In Markov Games, the concept of multiple agents operating within the same environment is introduced, each agent focused on maximizing its own reward objective. These kind of games are effectively dealt with Multi-Agent Reinforcement Learning (MARL) approaches \cite{marl}. MARL extends single-agent reinforcement learning techniques to handle interactions among multiple agents. Agents learn individual policies based on observations and joint rewards. Common MARL algorithms include Independent Q-Learning \cite{qlearning}, Q-Learning with Experience Replay \cite{qler}, and Multi-Agent Deep Deterministic Policy Gradient (MADDPG) \cite{modelingwithmarl}. MARL enables agents to learn effective strategies in dynamic environments where their actions affect the rewards and states of other agents. It facilitates the exploration of cooperative, competitive behaviors, and emergent phenomena in multi-agent systems.

In this context, Multi-Agent Markov Decision Processes (MMDPs) serve as a natural extension of both Markov Games and Multi-Agent Reinforcement Learning (MARL). By incorporating the concepts of states, actions, transition probabilities, and rewards, MMDPs provide a comprehensive framework for modeling decision-making in multi-agent environments. Q-Learning \cite{qlearning, qler}, (MADDPG) \cite{modelingwithmarl}, etc. falls under the wide umbrella of solutions for MMDPs. Modeling through MMDPs provide invaluable insights into the dynamics of interactions among autonomous agents, enabling the development of intelligent and adaptive systems capable of functioning effectively in real-world scenarios.

In reinforcement learning, algorithms like REINFORCE \cite{reinforce}, DQN \cite{dqn}, AAC \cite{aac}, DDPG \cite{ddpg}, SAC \cite{sac}, TRPO \cite{trpo}, PPO \cite{ppo}, etc., are widely studied for tasks utilizing techniques such as value iteration, policy iteration, Q-learning, and policy gradient methods. However, these algorithms are mainly tailored for stationary environments with single agents, often represented by MDPs. Real-world scenarios (MMDPs) often involve multiple interacting agents, leading to non-stationarity and increased variance, posing challenges for model convergence and achieving optimal results.

PPO \cite{ppo}, a policy gradient method, uses stochastic gradient ascent to optimize a surrogate objective function while iteratively collecting sample data from the environment. Empirically, PPO performs comparably to or better than other policy gradient algorithms such as AAC and TRPO in continuous control environments. However, PPO does not address variance introduced by actions of other agents within the same environment, as it is designed solely for single-agent scenarios. Numerous studies have focused on explicitly modeling the behaviors, goals, and beliefs of interacting agents (referred to as opponents) to accommodate their dynamics effectively \cite{yu2022surprising}. This approach deviates from treating agents solely as elements of the environment, aiming to achieve a more adaptable and robust policy by incorporating explicit modeling of agents.

It is crucial to recognize that opponent behavior may be non-stationary due to factors such as opponents actively learning or modeling our agents' behavior. Many studies overlook this concept, failing to adequately address the potential non-stationarity of opponent behavior. Neglecting this aspect can limit the effectiveness of approaches in capturing the true complexity of interactive dynamics. Several algorithms, such as RL-CD \cite{rlcd} and QCD \cite{qcd}, are designed to handle non-stationary opponents by positing that they periodically switch between multiple stationary strategies. However, RL-CD and QCD primarily focus on accurately detecting switch points rather than explicitly identifying opponent strategies. Nonetheless, identifying the opponent's strategy is crucial in addition to detecting switch points.

The algorithms mentioned treat the non-stationary opponent as part of the environment and do not explicitly learn an opponent model. Although this simplifies the algorithm, it can lead to reduced performance. To address this limitation, other algorithms such as MDP-CL \cite{mdpcl}, DriftER \cite{drifter}, and BPR+ \cite{bprplus} incorporate explicit opponent modeling. MDP-CL dynamically explores opponents during runtime without prior models, switching strategies by comparing modeled policies with observed behavior. However, it does not store the modeled policy after the switch. DriftER incorporates drift for switch detection and uses the R-max algorithm instead of random exploration.
BPR+ avoids the need for an MDP and employs cumulative rewards and Bayes' rule to detect switch points and opponent strategies. It stores learned policies in memory. These methods assume opponents will maintain a stationary policy for several episodes, as frequent switching would hinder opponent modeling and make the opponent's strategy appear random.

DPN-BPR+ \cite{dpnbprplus} introduces a methodology for detecting policy shifts using both reward signals and opponent behaviors, utilizing a policy bank to generate appropriate responses. Its applicability is limited to episodic environments, lacks guidance on constructing the policy bank using unsupervised methods and does not address real-time improvement of responses against known policies. LOLA \cite{LOLA} assumes opponents to be naively learning and aims to shape their learning to the agent's advantage, although opponents may resist such shaping. Meta-PG \cite{metapg} and meta-MAPG \cite{metamapg} approach continuous adapt through meta-learning, with meta-MAPG combining LOLA with meta-PG for multiple agents. M-FOS \cite{mfos} learns meta-policies for long-term opponent shaping and exploitation of LOLA, while MBOM \cite{mbom} employs recursive imagination and Bayesian mixing to generate responses. However, these approaches do not account for abrupt policy shifts by opponents, which can be challenging to predict.

Recent advancements in Deep Reinforcement Learning (DRL) have addressed non-stationarity in multi-agent systems. Algorithms like DRON \cite{dron}, MADDPG \cite{maddpg}, and DPIQN \cite{dpiqn} handle this issue by learning generalized policies from predefined features or observed opponent behaviors. DPIQN learns distinct features for opponent policies but trains a generalized Q-network for execution instead of reusing advantageous response policies. SAM \cite{EverettR18} models opponent policies to adapt response policies effectively, integrating DDPG with opponent modeling. It assesses opponent compliance using a running error metric and infers policy switches when the error surpasses a threshold. However, SAM lacks explicit definitions for determining the next response strategy and the decay mechanism for the running error.

In summary, to address the aforementioned limitations observed in different literature, in this work, we propose an online continuous running error estimation metric utilizing only observed state-action pairs and an online algorithm to detect the policy switch of the opponents to adapt responses.
\section{The Online Policy Switch Detection Model (OPS-DeMo)} \label{sec:proposals}
In this section, we present our proposals to address the issues discussed in Section \ref{sec:background}. First, we introduce a new metric to measure how well an agent complies with policies, based on its recent actions. This metric can be used on the fly and is detailed in Section \ref{sec:metric_policy}. Next, we describe the architecture of our proposed model in Section \ref{sec:architecture}. Finally, we introduce an algorithm in Section \ref{sec:algorithm_discuss} that is specifically tailored to adapt to changes in the opponent’s behavior.

\subsection{Metric to Measure Policy Compliance} \label{sec:metric_policy}

Detecting compliance in policies with nearly uniform action probability distributions across Markov states poses significant challenges, particularly in environments characterized by short trajectories. Traditional methods, which rely on frequency distribution, often fall short as they require frequent revisits to states—a condition that is impractical with limited data availability. A more viable approach entails comparing observed actions against their expected probabilities and calculating an error metric in real-time. Should this error exceed a predefined threshold, it suggests potential deviation from the policy. Nonetheless, to mitigate error accumulation stemming from inherent randomness, implementing a decay mechanism is essential.

This decay mechanism should consider both the expected error when the agent adheres to the policy and when it deviates from it. By incorporating this decay, the method aims to prevent error escalation indefinitely, especially in scenarios where the agent genuinely follows a stochastic policy with inherent sampling errors. Let the assumed policy for an MDP with a discrete action space be $\pi$. In a given Markov state $s$, the policy $\pi$ can be written as per the Equation \ref{eqn:prob_action}. Here $p_{a_i}$ denotes the probability of choosing action $a_i$ from state $s$.

\begin{equation}
\begin{aligned}
\pi(s) = \begin{bmatrix}
    p_{a_1}, p_{a_2}, p_{a_3}, \ldots p_{a_i}, \ldots ,p_{a_n}
\end{bmatrix}
\end{aligned}
\label{eqn:prob_action}
\end{equation}

Similarly, the observed frequency of actions in a given Markov state $s$ can be written as per Equation \ref{eqn:freq_dist}. Here, in Equation \ref{eqn:freq_dist}, $f_{a_i}$ is set to $1$ when the action $a_i$ is chosen. Otherwise, it will be set to $0$.

\begin{equation}
\begin{aligned}
f_o(s) = \begin{bmatrix}
    f_{a_1}, f_{a_2}, f_{a_3} \ldots f_{a_i}, \ldots ,f_{a_n}
\end{bmatrix}
\end{aligned}
\label{eqn:freq_dist}
\end{equation}

Now, based on Equations \ref{eqn:prob_action} and \ref{eqn:freq_dist}, the observed error in state $s$ from assuming the agent follows policy $\pi$ can be written as,

\begin{equation}
\begin{aligned}
e_o(\pi, s) = \frac{1}{2}\sum_{k=1}^{n}|\pi(s) - f_o(s)|_{a_k}
\end{aligned}
\label{eqn:running_error}
\end{equation}

Now, we state some lemmas to discuss some features associated with Equation \ref{eqn:running_error}.

\begin{lemma} \label{lemma:running_error_simp}
Consider a timestep $t$ within the context of a MDP with a discrete action space of $n$ actions, wherein an agent follows a policy $\pi$ and selects an action $a_i$ from a Markovian state $s$. Within this framework, the observed error at $t$ can be formulated as $(1 - p_{a_i})$, where $p_{a_i}$ signifies the probability of opting for action $a_i$ by the stochastic policy $\pi$.
\end{lemma}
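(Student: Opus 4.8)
The plan is to prove the identity by direct substitution of the one-hot observed-frequency vector into Equation \ref{eqn:running_error} and then simplifying the resulting sum using only elementary properties of a probability distribution. First I would record the key structural fact: since the agent selects exactly one action $a_i$ at timestep $t$, the observed frequency vector $f_o(s)$ defined in Equation \ref{eqn:freq_dist} has $f_{a_i} = 1$ and $f_{a_k} = 0$ for every $k \neq i$. Plugging this into Equation \ref{eqn:running_error} gives
\begin{equation}
\begin{aligned}
e_o(\pi, s) = \frac{1}{2}\Bigl( |p_{a_i} - 1| + \sum_{k \neq i} |p_{a_k} - 0| \Bigr).
\end{aligned}
\end{equation}

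Next I would dispose of the absolute-value signs. Because $\pi(s)$ is a probability distribution, each $p_{a_k} \in [0,1]$; in particular $p_{a_i} \le 1$, so $|p_{a_i} - 1| = 1 - p_{a_i}$, and $|p_{a_k} - 0| = p_{a_k}$ for $k \neq i$ since these entries are non-negative. This reduces the expression to $\tfrac{1}{2}\bigl( (1 - p_{a_i}) + \sum_{k \neq i} p_{a_k}\bigr)$.

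Finally I would invoke the normalization $\sum_{k=1}^{n} p_{a_k} = 1$, which yields $\sum_{k \neq i} p_{a_k} = 1 - p_{a_i}$. Substituting this in, the bracket becomes $(1 - p_{a_i}) + (1 - p_{a_i}) = 2(1 - p_{a_i})$, and the factor $\tfrac{1}{2}$ cancels, giving $e_o(\pi, s) = 1 - p_{a_i}$ as claimed. There is no real obstacle here: the statement is essentially a bookkeeping identity, and the only points requiring any care are the sign handling in the absolute values (which relies on $p_{a_i}\le 1$ and $p_{a_k}\ge 0$) and the use of the normalization condition of $\pi(s)$, both of which hold by definition of a stochastic policy over a discrete action space.
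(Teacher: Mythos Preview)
Your proof is correct and follows essentially the same approach as the paper: both substitute the one-hot observed-frequency vector into Equation~\ref{eqn:running_error}, remove the absolute values using $0 \le p_{a_k} \le 1$, and then apply the normalization $\sum_k p_{a_k} = 1$ to obtain $\tfrac{1}{2}\bigl((1-p_{a_i}) + (1-p_{a_i})\bigr) = 1 - p_{a_i}$.
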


\begin{proof}
Let us examine the observed frequency of each action, noting that all actions except $a_i$ have a frequency of 0. Consequently, the observed error can be expressed as follows:

\begin{equation}
\begin{aligned}
e_o(\pi, s) &= \frac{1}{2}[ |0-p_{a_1}| + |0-p_{a_2}| + |0-p_{a_3}| + \ldots  |1 - p_{a_i}| + |0- p_{a_{i+1}}| + \ldots +|0-p_{a_n}| ]\\
&\text{(From Equation \ref{eqn:running_error})}\\
&= \frac{1}{2} [p_{a_1} + p_{a_2} + p_{a_3} + \ldots + (1-p_{a_i}) + p_{a_{i+1}} + \ldots p_{a_n}]\\
&\text{(Because, any $0 \leq p_{a_i} \leq 1$)} \\
&= \frac{1}{2}[(1-p_{a_i}) + (1-p_{a_i})] \\
&\text{(Because, any $\sum_{j=1}^n p_{a_j} = 1 \Rightarrow p_{a_i} = 1-\sum_{k=1, k\neq i}^{n} p_{a_k}$)} \\
&= (1-p_{a_i})
\end{aligned}
\end{equation}
\end{proof}

Since observed error $e_o(\pi, s)$ has occurred due to the selected action $a$, in this discussion, we  also use $e_o(\pi, s, a)$ to denote the similar meaning of observed error from state $s$ due to an action $a$ under policy $\pi$.

\begin{lemma}
In a MDP, with a discrete action space of $n$ actions, consider a time step $t$ where an agent follows a policy $\pi$, and the system is in a Markovian state $s$. Within this framework, the expected naturally occurring error from policy $\pi$ at $t$ step can be expressed as $\sum_{j=1}^n p_{a_j}(1 - p_{a_j})$. Here, $p_{a_j}$ represents the probability of selecting action $a_j$ following the stochastic policy $\pi$.
\end{lemma}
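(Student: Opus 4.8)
The plan is to compute the expectation directly by conditioning on which action the agent actually selects at step $t$, and then invoke Lemma \ref{lemma:running_error_simp} to evaluate the error in each case. Since the agent is assumed to follow the stochastic policy $\pi$ in the Markovian state $s$, the action taken is a random variable $A$ with $\Pr[A = a_j] = p_{a_j}$ for $j = 1, \ldots, n$, and these probabilities sum to $1$ by the normalization of $\pi(s)$.

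First I would recall from Lemma \ref{lemma:running_error_simp} that, conditioned on the event $\{A = a_j\}$, the observed error at step $t$ is exactly $e_o(\pi, s, a_j) = 1 - p_{a_j}$. This is a deterministic quantity once the selected action is fixed, so there is nothing further to average over inside each case. Then I would apply the law of total expectation over the action choice, writing the expected naturally occurring error as $\mathbb{E}[e_o(\pi, s, A)] = \sum_{j=1}^{n} \Pr[A = a_j]\, e_o(\pi, s, a_j) = \sum_{j=1}^{n} p_{a_j}(1 - p_{a_j})$, which is the claimed expression.

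There is essentially no technical obstacle here: the statement is an immediate consequence of Lemma \ref{lemma:running_error_simp} combined with the definition of expectation over the policy's action distribution. The only point worth stating carefully is that "naturally occurring error" means the error incurred when the agent genuinely samples its action from $\pi$ (as opposed to an adversarial or deviating choice), so that the weighting of the per-action errors $1 - p_{a_j}$ is precisely by the policy probabilities $p_{a_j}$ themselves; making this interpretation explicit is the one thing I would be sure to spell out before the one-line computation.
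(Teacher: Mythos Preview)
Your proposal is correct and matches the paper's own argument essentially line for line: both invoke Lemma~\ref{lemma:running_error_simp} to get the per-action error $1 - p_{a_j}$ and then take the expectation over the policy's action distribution to obtain $\sum_{j=1}^n p_{a_j}(1 - p_{a_j})$. Your version is slightly more explicit about the law of total expectation and the meaning of ``naturally occurring error,'' but the underlying approach is identical.
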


\begin{proof}
The probability of selecting action $a_i$ is $p_{a_i}$ when the agent is following policy $\pi$, which induces observed error $(1 - p_{a_i})$ (Lemma \ref{lemma:running_error_simp}). Therefore the expected error when following policy $\pi$ is:
\begin{equation}
\label{eq:following_policy}
\begin{aligned}
E\Bigg(\frac{e_o(\pi, s) }{ \pi}\Bigg) = \sum_{j=1}^n p_{a_j}(1 - p_{a_j})
\end{aligned}
\end{equation}
\end{proof}

\begin{lemma}
In a MDP context, with a discrete action space of $n$ actions, consider a time step $t$ where an agent follows any policy $\phi$ except a certain policy $\pi$, and the system is in a Markovian state $s$. Within this framework, the expected naturally occurring error from policy $\pi$ at $t$ can be expressed as $\frac{n-1}{n}$.
\end{lemma}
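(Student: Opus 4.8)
The plan is to reduce the statement to a one-line computation via Lemma~\ref{lemma:running_error_simp}, after fixing the right interpretation of the phrase ``follows any policy $\phi$ except $\pi$''. First I would recall that, by Lemma~\ref{lemma:running_error_simp}, whenever the agent selects an action $a_j$ from state $s$, the observed error incurred against the assumed policy $\pi$ equals exactly $1 - p_{a_j}$, regardless of which policy actually generated that action. Hence the expected observed error at step $t$ is $\sum_{j=1}^{n}\Pr[a_j]\,(1 - p_{a_j})$, where $\Pr[a_j]$ is the probability that the deviating agent plays $a_j$ at that step.

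The key modeling step is that, absent any knowledge of $\phi$ beyond $\phi \neq \pi$, the action played is treated as uniformly distributed over the $n$ actions, i.e.\ $\Pr[a_j] = 1/n$ for every $j$. With this, the expectation collapses: $\sum_{j=1}^{n}\tfrac{1}{n}\,(1 - p_{a_j}) = 1 - \tfrac{1}{n}\sum_{j=1}^{n}p_{a_j} = 1 - \tfrac{1}{n} = \tfrac{n-1}{n}$, using $\sum_{j} p_{a_j} = 1$ from Equation~\ref{eqn:prob_action}. I would also note that the same value $\tfrac{n-1}{n}$ arises under the dual reading in which $\pi$ is taken to be (near-)uniform, $p_{a_j}\approx 1/n$, and one averages over an arbitrary $\phi$; pointing this out makes the lemma robust to either interpretation, which fits the paper's regime of nearly uniform policies.

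The main obstacle here is not computational but justificatory: the step $\Pr[a_j] = 1/n$ is a modeling assumption rather than a consequence, so I would argue it is the natural maximum-ignorance choice — when the opponent is not following $\pi$ there is no basis to prefer one action over another, so the uniform prior is the appropriate default, and it is also consistent with the short-trajectory setting in which no empirical estimate of $\phi$ is available. I would close by remarking that $\tfrac{n-1}{n}$ is precisely the ceiling of the per-step error $1 - p_{a_j}$ and serves as the ``deviation'' baseline to be paired with the ``compliance'' baseline $\sum_{j} p_{a_j}(1 - p_{a_j})$ from the preceding lemma when calibrating the decay of the running-error metric.
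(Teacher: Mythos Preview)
Your proposal is correct and follows essentially the same approach as the paper: both invoke Lemma~\ref{lemma:running_error_simp} to write the per-step error as $1 - p_{a_j}$, both adopt the modeling assumption that an unknown deviating policy induces a uniform action distribution $\Pr[a_j]=1/n$, and both reduce the sum $\sum_{j=1}^{n}\tfrac{1}{n}(1-p_{a_j})$ to $\tfrac{n-1}{n}$ via $\sum_j p_{a_j}=1$. Your write-up is more explicit than the paper's about the fact that the uniform step is a maximum-ignorance modeling choice rather than a derived consequence, and your added remarks on the dual reading and the role of $\tfrac{n-1}{n}$ as the deviation baseline in the decay calibration are accurate context, but they go beyond what the paper's own proof contains.
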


\begin{proof}
Consider a scenario where the agent deviates from policy $\pi$ and instead complies with an alternative but unknown policy $\phi$. In this context, the probability of the agent selecting any action while not following policy $\pi$ is uniformly distributed across all actions, although the specific distribution under policy $\phi$ remains unknown. Consequently, the expected observed error from policy $\pi$, when not following policy $\pi$, can be articulated as Equation \ref{eq:not_following_policy}. Here $\pi^c$ denotes the set of all possible policies for the particular problem.

\begin{equation}
\label{eq:not_following_policy}
\begin{aligned}
E\left(\frac{e_o(\pi,s)}{\phi \in \pi^{c}}\right) = \sum_{j=1}^n \frac{1}{n}(1 - p_{a_j}) = \frac{n-1}{n}
\end{aligned}
\end{equation}

\end{proof}

\subsection{Architecture of the Model} \label{sec:architecture}

\begin{figure}[ht]
\centering
\includegraphics[width=0.6\textwidth]{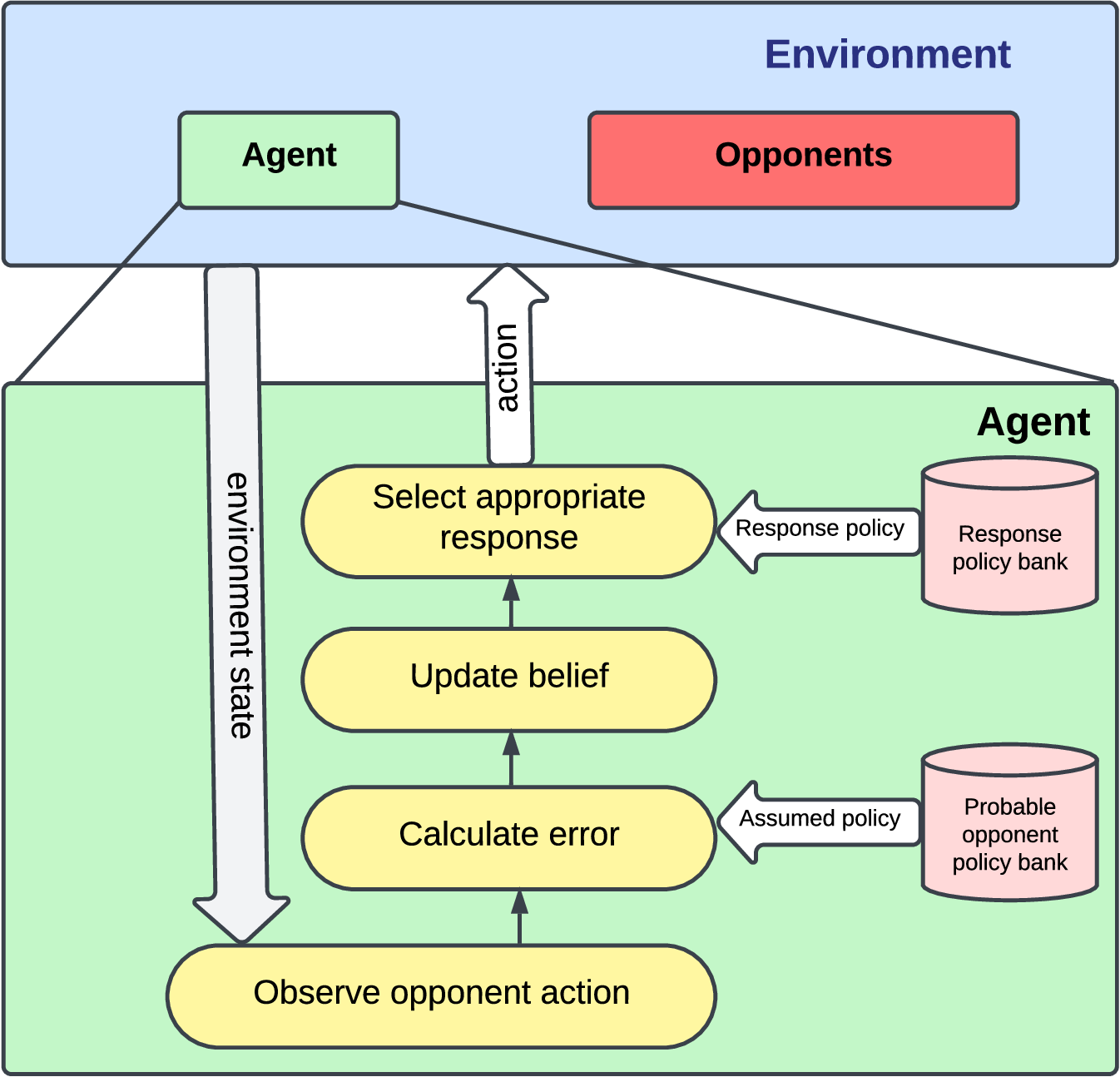}
\caption{Architecture of OPS-DeMo}
\label{fig:architecture}
\end{figure}

In the given environment modeled by an MDP with a discrete action space, we adopt a strategy of training response policies $\pi_i$ against each of the potential opponent policies $\Phi_i \in \Pi_o$, where $\Pi_o$ denotes the policy bank containing the various probable opponent policies. The training of response policies involves employing state-of-the-art learning algorithms such as PPO. Following the training process, our agent is prepared for deployment within the environment. Incorporating all the ideas, we have given a high-level overview of our proposed OPS-DeMo architecture in Fig. \ref{fig:architecture}.

At any given instance when the opponent is presumed to follow a specific policy $\phi_i$ and is observed selecting action $a_j$, we leverage the policy bank $\Pi_o$ to ascertain the probability of each probable opponent policy choosing the observed action. Subsequently, we calculate the corresponding observed error and update our belief regarding the current opponent policy utilizing a designated algorithm. After this belief update, we select an appropriate response policy and determine the agent's action based on the chosen response policy. This iterative process allows our agent to adapt dynamically to varying opponent strategies encountered during its deployment.

\subsection{Algorithm Description} \label{sec:algorithm_discuss}
We propose an algorithm (Algorithm~\ref{alg:opsdemo}) for running error estimation (lines 8-17), policy switch detection (lines 18-19), and adapting the response policy (lines 20-21). This algorithm utilizes the already trained opponent's policy bank $\Pi_{o}$ and the PPO-trained response policy bank $\rho$ according to the updated beliefs about the opponent's current policy to maximize accumulated rewards. The algorithm uses the provided running error estimation method to see which policy from $\Pi_o$ more aligns with the opponent's recent actions and then picks the policy with the lowest running error, and the appropriate response policy is taken to maximize rewards in the current scenario.

\begin{algorithm}[!ht]
\caption{OPS-DeMo}
\label{alg:opsdemo}
\SetKwInOut{Input}{Input}
\Input{Observed state-action pairs, Opponent's Probable Policy Bank $\Pi_o$, PPO-Trained Response Policy Bank $\rho$}
\textbf{Initialization:}\\
Initialize Opponent's Probable Policy Bank $\Pi_o$\;
Initialize PPO-Trained Response Policy Bank $\rho$\;
Randomly choose an assumed opponent policy $\phi_{\text{assumed}}$ from $\Pi_o$\;
Initialize running errors for all opponent policies: $running\_error(\phi) \gets 0$ for all $\phi \in \Pi_o$\;
Initialize threshold value for policy switch detection\;
\SetAlgoLined
\ForEach{observed state-action pair $(s,a)$ of the opponent}{
    \ForEach{opponent policy $\phi \in \Pi_o$}{
        Calculate Observed Error $e_{o}(\phi, s, a)$ (Lemma~\ref{lemma:running_error_simp}), expected error $e_f$ when following the assumed policy $\phi$ (Equation~\ref{eq:following_policy}), and $e_{nf}$ while not following the assumed policy (Equation~\ref{eq:not_following_policy}) using the provided equations\;
        $decay(s, \phi) \gets \alpha \times e_f + (1 - \alpha) \times e_{nf}$\;
        $running\_error(\phi) \gets running\_error(\phi) + e_{o}(\phi, s, a) - decay(s, \phi)$\;
        \If{$running\_error(\phi) < 0$}{
            $running\_error(\phi) \gets 0$\;
        }
        \If{$running\_error(\phi) > \text{threshold}$}{
            $running\_error(\phi) \gets \text{threshold}$\;
        }
    }

    \If{$running\_error(\phi_{\text{assumed}}) = \text{threshold}$}{
        \textbf{Detect policy switch}\;
        Choose a new assumed opponent policy $\phi_{\text{assumed}}$ with minimum running error\;
        $running\_error(\phi_{\text{assumed}}) \gets running\_error(\phi_{\text{assumed}})/2$;
    }
    Choose an action according to response policy $\rho(\phi_{\text{assumed}})$\;
}
\end{algorithm}

\subsection{Detection of Policy Switch}
We use the observed errors to accumulate a running error for each of the policies from the opponent's policy bank $\Pi_o$ (refer to Algorithm~\ref{alg:opsdemo}, lines 8-24). If the running error of the currently presumed opponent policy $\Phi$ exceeds a threshold value, we assume that the opponent has switched its policy in the meantime. However, naturally occurring errors may make the running error go indefinitely large. Therefore, a decay method of running error is essential.

\subsection{Error Decay}
For a given Markovian state $s$, where the expected error when following policy $\Phi$ is denoted as $e_f$ (refer to Equation~\ref{eq:following_policy}), and the expected error when not following policy $\Phi$ is denoted as $e_{nf}$ (refer to Equation~\ref{eq:not_following_policy}), the decay between these values is defined by Equation \ref{eqn:decay_cal}. Here $\phi^c$ denotes the set of all possible policies for the particular problem.

\begin{equation}
\begin{aligned}
\begin{split}
d & = \alpha e_f + (1 - \alpha) e_{nf} \\
& = \alpha E\left(e_o(\Phi, s)\right) + (1-\alpha)E\left(\frac{e_o(\Phi,s)}{\phi' \in \Phi^{c}}\right)
\end{split}
\end{aligned}
\label{eqn:decay_cal}
\end{equation}

In this equation, the parameter $\alpha \in [0,1]$ represents the strictness factor of the decay. A higher value of $\alpha$ implies a more stringent detection model that disallows policies similar to, but not significantly differing from, the assumed one. Conversely, lower values of $\alpha$ allow for a more lenient approach. The careful choice of $\alpha$ is crucial in tailoring the detection model to specific requirements.

This decay prevents the running error from growing indefinitely and gets calculated dynamically (refer to Algorithm~\ref{alg:opsdemo}, line 10).

\subsection{Identification of the Post-Switch Policy}
In order to reuse the trained response policies effectively, the identification of the opponent's policy after a switch becomes a critical task. We suggest maintaining a record of running errors for all potential opponent policies. When the running error associated with the presently assumed policy surpasses a predetermined threshold, the policy exhibiting the minimum current running error is designated as the switched policy. Subsequently, the running error is halved to mitigate the occurrence of excessively frequent switches (refer to Algorithm~\ref{alg:opsdemo}, line 22). This approach aims to enhance the robustness and stability of policy detection in dynamic environments.

\section{Empirical Evaluation} \label{sec:evaluation}
In this section, we assess the performance of OPS-DeMo within a Markov game, Predator Prey, by comparing it against current state-of-the-art learning algorithms using various metrics. The key metrics for this analysis include the accumulated rewards and the accuracy of assumptions about the opponent’s policy. This assessment aims to specifically evaluate the efficacy of the running error estimation method in the context of frequent changes in the opponent's policy and varying strictness levels.

Distinct from conventional learning algorithms, OPS-DeMo utilizes models that have been trained after the initial learning phase. For the purposes of this evaluation, we exclude active learning components, operating under the assumption of a set of probable opponent policies that change infrequently. The response policies employed are pre-trained using techniques such as PPO. Notably, models such as BPR+ and DPN-BPR+ are excluded from the comparison due to their inapplicability in continuous environments. Additionally, SAM is also omitted due to its undefined decay parameters and ambiguity in policy definition.

\subsection{Implementation}
The experimental setup involves a 2-predator, 2-prey configuration with fully observable environmental states and actions. No direct communication between agents is allowed. This setup accommodates diverse policies for each agent, enabling distinct optimal response policies for varying opponent policies. Rewards are intentionally kept sparse throughout the episode to minimize the information available about the opponent's policy. Rather than assuming the optimality of the opponent's actions, the agent focuses on identifying optimal actions based on its understanding of the opponent's likely behavior. The experimentation is conducted on a machine with an Apple Silicon M2 processor and 8GB of primary memory.

\subsection{Environment Setup}
The setup involves a Predator Prey grid-world environment with two predators and two prey. The objective of the game is for each predator to simultaneously capture one prey, with the aim of capturing both prey in the shortest time possible. This approach seeks to maximize the rewards earned within a single episode. Negative rewards are incurred for each timestep where a predator fails to catch a prey or experiences collisions with other predators. This setup addresses dual objectives: optimizing successful captures and minimizing undesirable events.

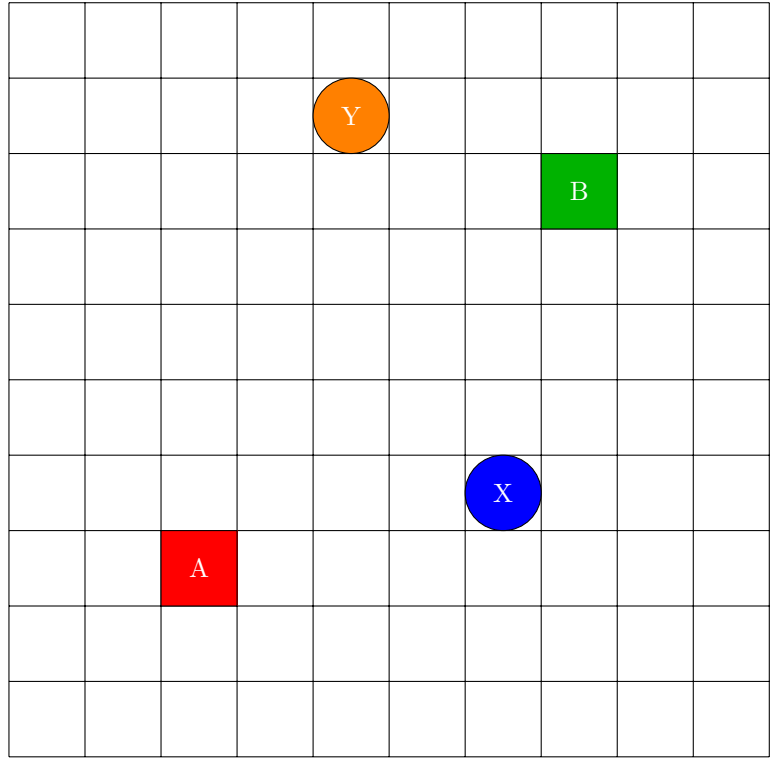
\begin{figure}[ht]
\centering
\begin{tikzpicture}
        \draw[step=1,black,thin] (0,0) grid (10,10);

        \node[draw, fill=red, minimum size=1cm, text=white] at (2.5, 2.5){A};
    \node[draw, fill=green!70!black, minimum size=1cm, text=white] at (7.5, 7.5){B};

        \node[draw, fill=blue, circle, minimum size=1cm, text=white] at (6.5, 3.5){X};
    \node[draw, fill=orange, circle, minimum size=1cm, text=white] at (4.5, 8.5){Y};

                            \end{tikzpicture}
\caption{Predator-Prey Environment}
\label{fig:predator_prey}
\end{figure}

The game setup includes two prey, designated as 'Prey X' and 'Prey Y', which move randomly throughout the environment, relying solely on observations. The first predator is named ``Predator A", and the second is ``Predator B". Similarly, the preys are ``Prey X" and ``Prey Y." Predator B has two probable policies: chasing Prey X or Prey Y, periodically switching. Predator A adapts to these changes, choosing optimal actions based on its belief.

In the training environment for both predators, the reward structure is defined: +100 for catching both prey, -1 for each timestep without adjacent prey, and -1 for colliding with agents. The primary objectives are to maximize Predator A's rewards and accurately update belief regarding Predator B's current policy. In Fig. \ref{fig:predator_prey}, we present a visual representation to illustrate the described Predator Prey scenario.

\subsection{Training Setup}
Our training environment is a $10 \times 10$ predator-prey grid-world created using the OpenAI Gym library~\cite{openai-gym}. In this grid-world, each training episode lasts for a maximum of 40 timesteps. To facilitate the training process, we employ the Stable-Baselines3 library~\cite{stable-baselines3}. Specifically, we train two potential policies for \textbf{Predator B}, with a focus on either chasing Prey A or Prey B. This training utilizes PPO algorithm and ran for up to 1,000,000 iterations. To address the sparse reward issue, we introduce a penalty based on the Manhattan distance between \textbf{Predator B} and its target Prey into the environment-provided reward. Subsequently, we proceed to train a response policy for \textbf{Predator A} for each of the potential policies of \textbf{Predator B} using the PPO algorithm, again reaching 1,000,000 iterations.

\subsection{Simulation of Policy Switch}
Upon deploying the trained models in the environment, we implement a periodic policy switch for \textbf{Predator B} between chasing Prey X and Prey Y. Crucially, this information about \textbf{Predator B}'s current policy is kept concealed from \textbf{Predator A}. \textbf{Predator A} only has access to information about its own rewards and the chosen actions of \textbf{Predator B} at each timestep. \textbf{Predator B} utilizes the online data to calculate the observed error and the corresponding decay of the Markov state, updating the running error. The response policy of \textbf{Predator B} is then chosen from its policy bank based on this information to determine its action for the next timestep.

\subsection{Hyperparameters related to the Experiments}
Among others, we want to focus on the following hyperparameters that we have experimented with in our work,
\begin{enumerate}
    \item \textit{Experiment with Different Strictness Factors}: We conduct experiments with different strictness factors, $\alpha \in \{0.8, 0.9, 0.95, 0.99\}$, to assess their impact on the model's performance. Data is collected on timesteps where the assumed policy aligns with the opponent's concealed policy.
    \item \textit{Experiment with a Standalone PPO-trained Model}: To compare the performance of OPS--DeMo to PPO, we train a \textbf{Predator A} model with PPO. In this setup, \textbf{Predator B} switches its policy every 100 timestamps, with training extending up to 1,000,000 iterations. These trained models are then evaluated, and their performance is compared using accumulated rewards. The standalone PPO-trained model does not have a belief mechanism for predicting an opponent's behavior and only uses the environment state to determine its next action.
\end{enumerate}

Now, we present some empirical results to analyze the novelty and efficiency of our solution.

\subsection{Performance of Running Error Estimation}
We assessed the effectiveness of OPS-DeMo's running error estimation method when \textbf{Predator B} switched its policy every $n$ timesteps.

\begin{figure}[ht]
  \centering
  \includegraphics[width=\textwidth]{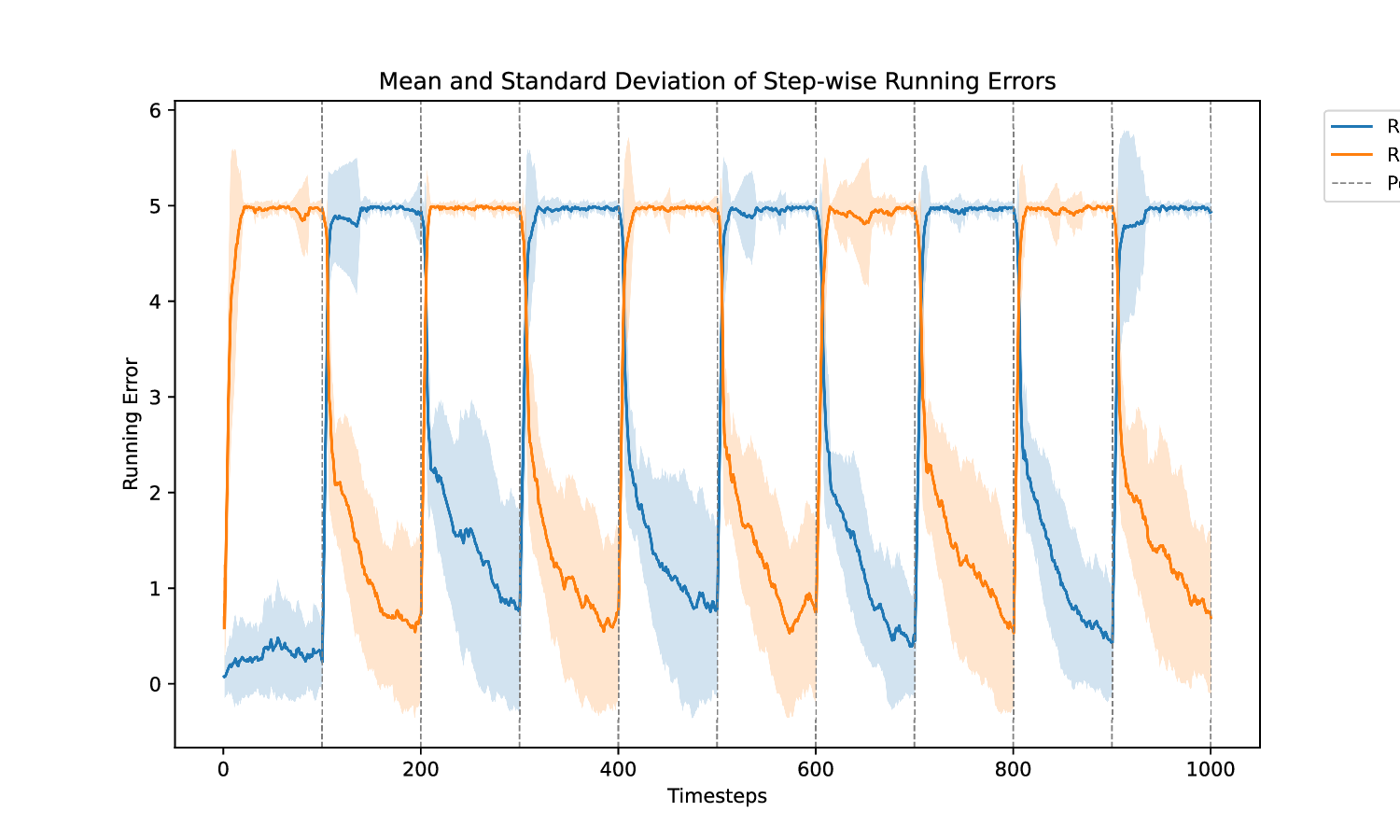}   \caption{Running errors of two probable policies of Predator B, based on observations from Predator A, with Predator B switching its policy every $100$ timesteps.}
  \label{fig:switch100runningerror}
\end{figure}

\begin{figure}[ht]
  \centering
  \includegraphics[width=\textwidth]{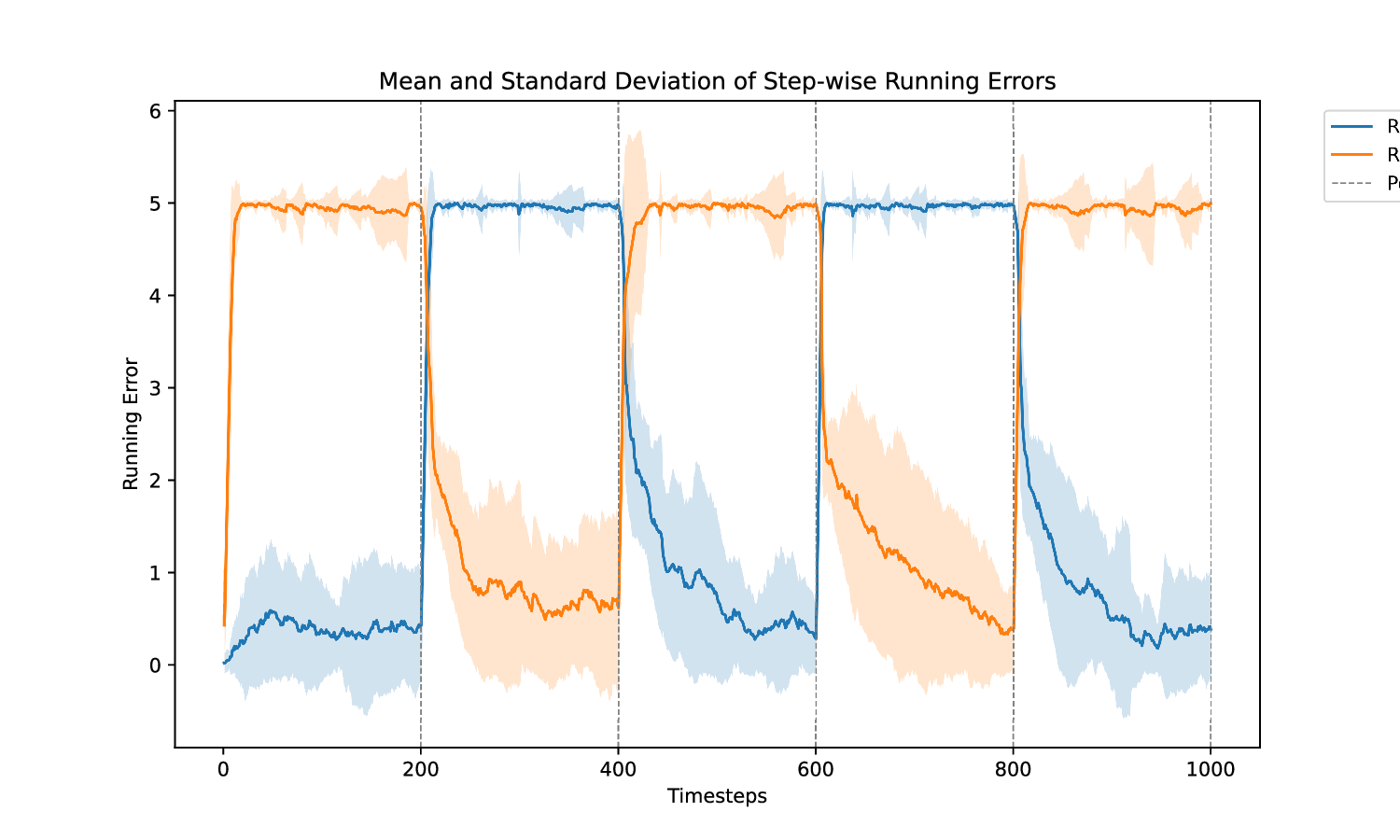}   \caption{Running errors of two probable policies of Predator B, based on observations from Predator A, with Predator B switching its policy every $200$ timesteps.}
  \label{fig:switch200runningerror}
\end{figure}

In both Fig.~\ref{fig:switch100runningerror} and Fig.~\ref{fig:switch200runningerror}, it is evident that the running error remains low when the assumed policy is correct and stays at the threshold when the assumed policy is incorrect. Notably, there is a rapid increase in the running error for the incorrectly assumed policy at the points of policy switch and a relatively slower decline for the correct policy. This is because the error decay is closer to the expected observed error when the opponent is following the Assumed Opponent Policy (AOP), the observed error is much greater than the decay when the opponent is not following AOP. The halving of the running error for the newly assumed policy after detecting the switch contributes to the rapid convergence of the correct assumption to a low running error.

\subsection{Impact of the Strictness Factor}
Our experiment in the Predator-Prey environment involved varying the strictness factors ($\alpha$) while predator B switches its policy every $100$ timestep. We examined how the running errors behaved under different strictness conditions.

\begin{figure}[ht]
  \centering
  \includegraphics[width=0.8\textwidth]{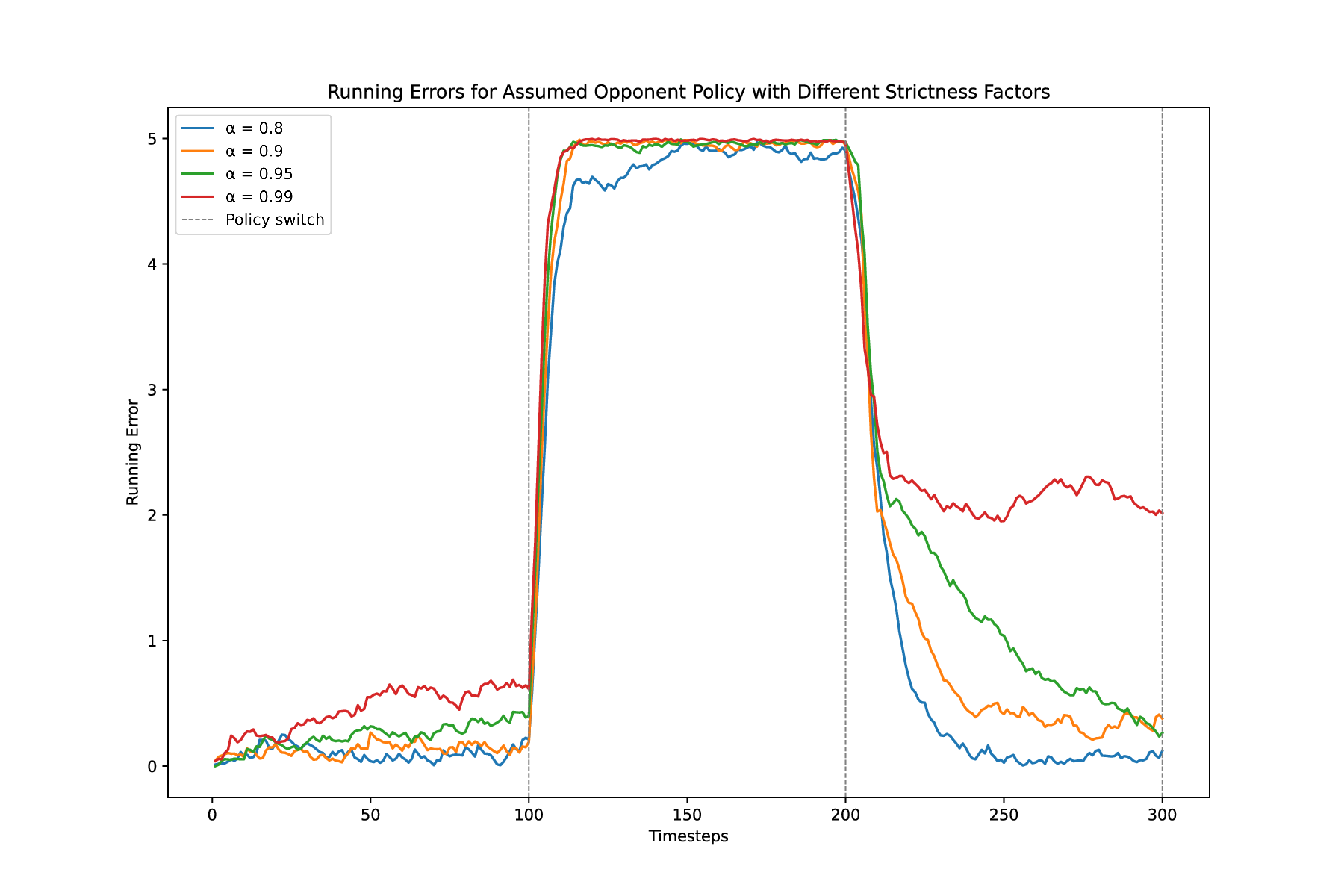}   \caption{Running errors of a probable policy of Predator B, based on observations from Predator A. Predator B switches its policy every $100$ timestep, illustrating the impact of different strictness factors on running errors.}
  \label{fig:strictness_switch100runningerror}
\end{figure}

Fig.~\ref{fig:strictness_switch100runningerror} illustrates that an increased strictness factor results in a quicker rise in running errors following a policy switch by the opponent. However, it also indicates that the reduction in running error after transitioning to that specific AOP is more gradual under higher strictness conditions. This phenomenon occurs because the component added to the running error is typically negative when the opponent adheres to AOP but positive when the opponent deviates from AOP. The closer the decay aligns with the expected observed error while following AOP (Equation~\ref{eq:following_policy}), the smaller the magnitude of the negative value becomes, and the larger the magnitude of the positive value grows.

\subsection{Accuracy of Assumed Opponent Policy}
In our experiment, we varied the strictness factors ($\alpha$) while Predator B switched its policy every $100$ timestep. We assessed the accuracy of the AOP by calculating the ratio of timesteps where the assumed policy matched the actual policy to the total number of timesteps.

\begin{figure}[ht]
  \centering
  \includegraphics[width=0.7\textwidth]{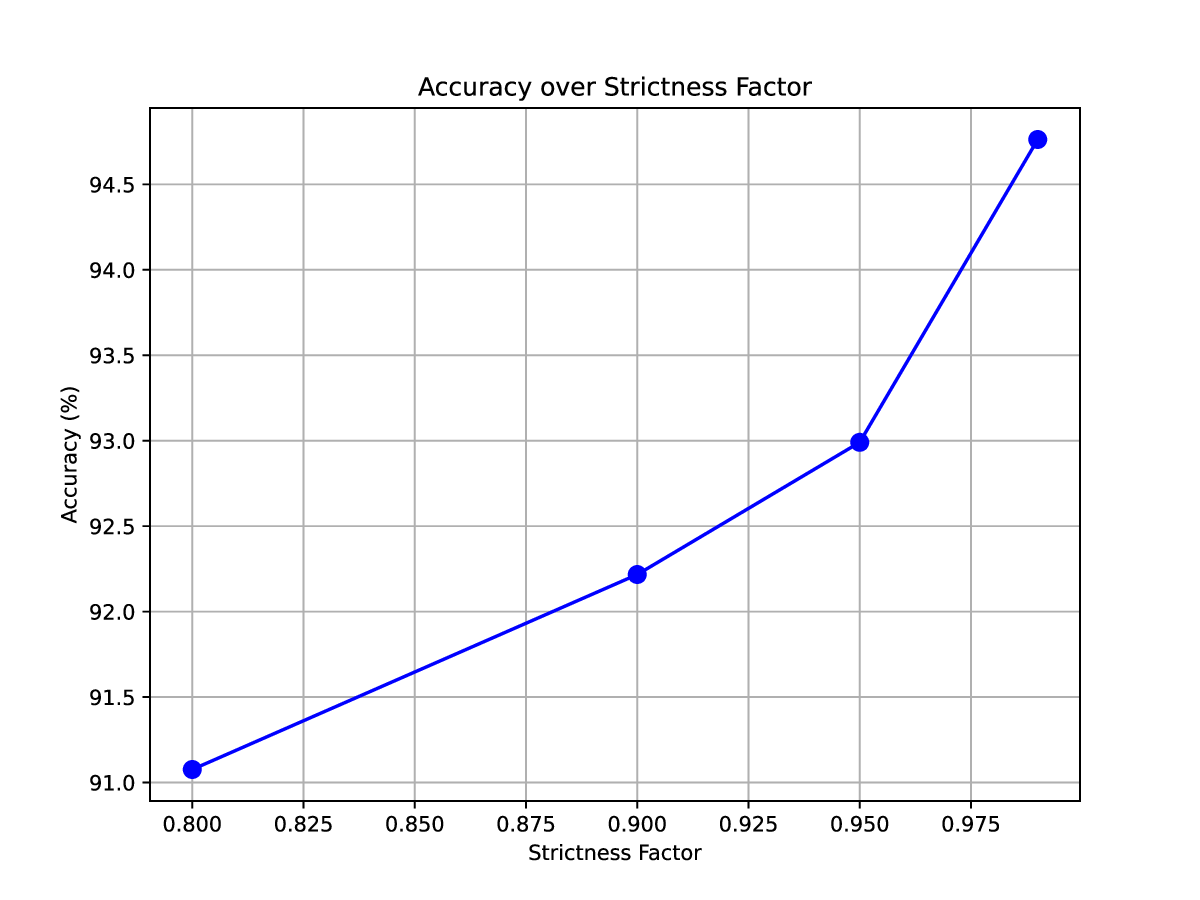}   \caption{Accuracy of Predator B's Assumed Opponent Policy (AOP) based on observations from Predator A. Predator B switches its policy every $100$ timestep, showing the impact of different strictness factors on accuracy.}
  \label{fig:strictness_accuracy_switch100}
\end{figure}

Fig.~\ref{fig:strictness_accuracy_switch100} illustrates that as we increase the strictness factor, the accuracy improves. This suggests that the error estimation method becomes less lenient, becoming more discerning in distinguishing between somewhat similar yet different policies. This is because a higher strictness factor makes the running error rise faster and detects the policy switch earlier. But in the trade-off, natural noises from the environment may get some false positives in that case.

\subsection{Comparison through Episodic Accumulated Rewards}
Based on the experimental data comprising 25 runs, each lasting 1000 episodes, discernible enhancements in accumulated rewards per episode are evident. These improvements stem from enhanced collaborative dynamics between the two predators. Predator A, exhibiting swift adaptation to policy switches by Predator B, formulates responses based on its inferred beliefs about Predator B's current policy.

In Fig.~\ref{fig:reward_histogram}, we observe that while the standalone PPO-trained model performs commendably in most episodes, there are instances where it fails to capture both preys within the defined maximum of 40 timesteps. Consequently, it misses out on the +100 reward due to a lack of collaborative efforts. Conversely, OPS-DeMo, which dynamically detects Predator B's policy during runtime and adjusts its response policy accordingly, demonstrates fewer occurrences of such failures.

\begin{figure}[ht]
  \centering
  \includegraphics[width=0.8\textwidth]{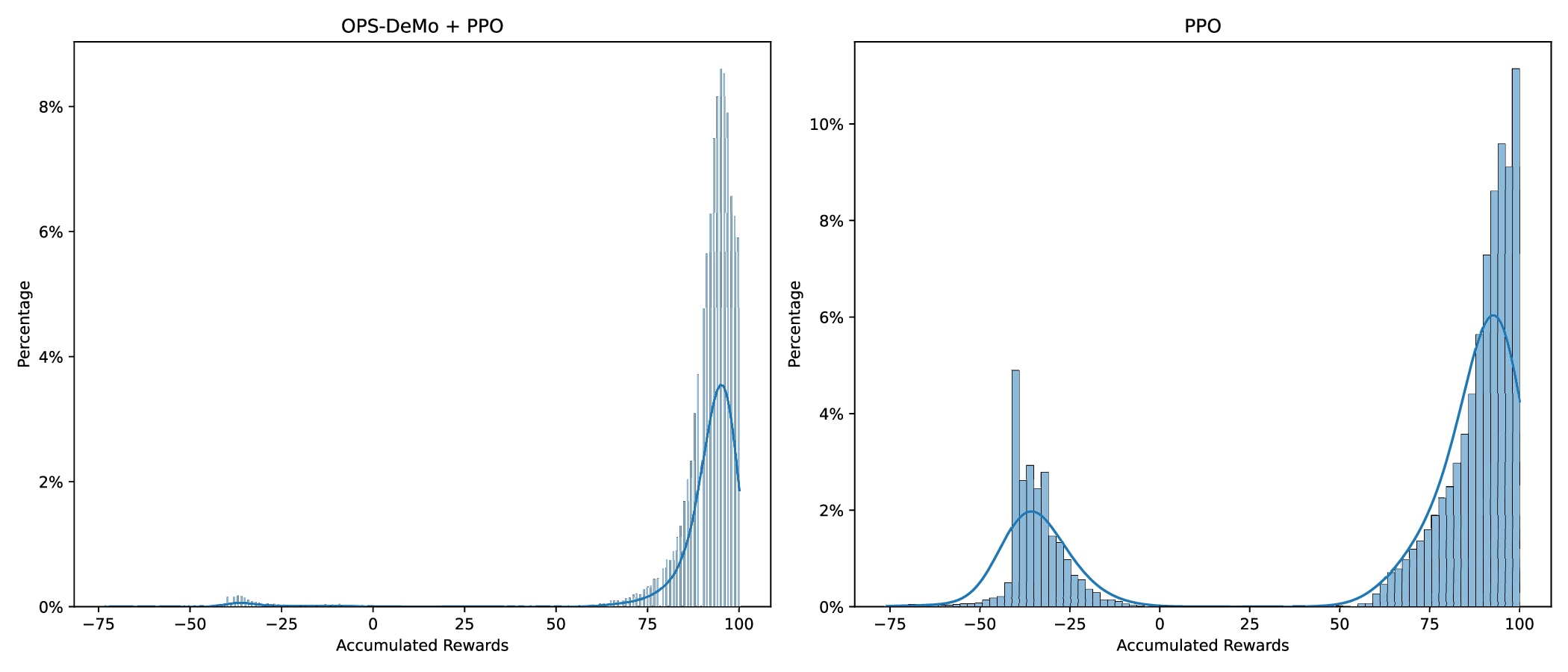}
  \caption{Histogram depicting accumulated rewards per episode by Predator A while Predator B switches its policy every $100$ timesteps.}
  \label{fig:reward_histogram}
\end{figure}

\begin{figure}[ht]
  \centering
  \includegraphics[width=0.5\textwidth]{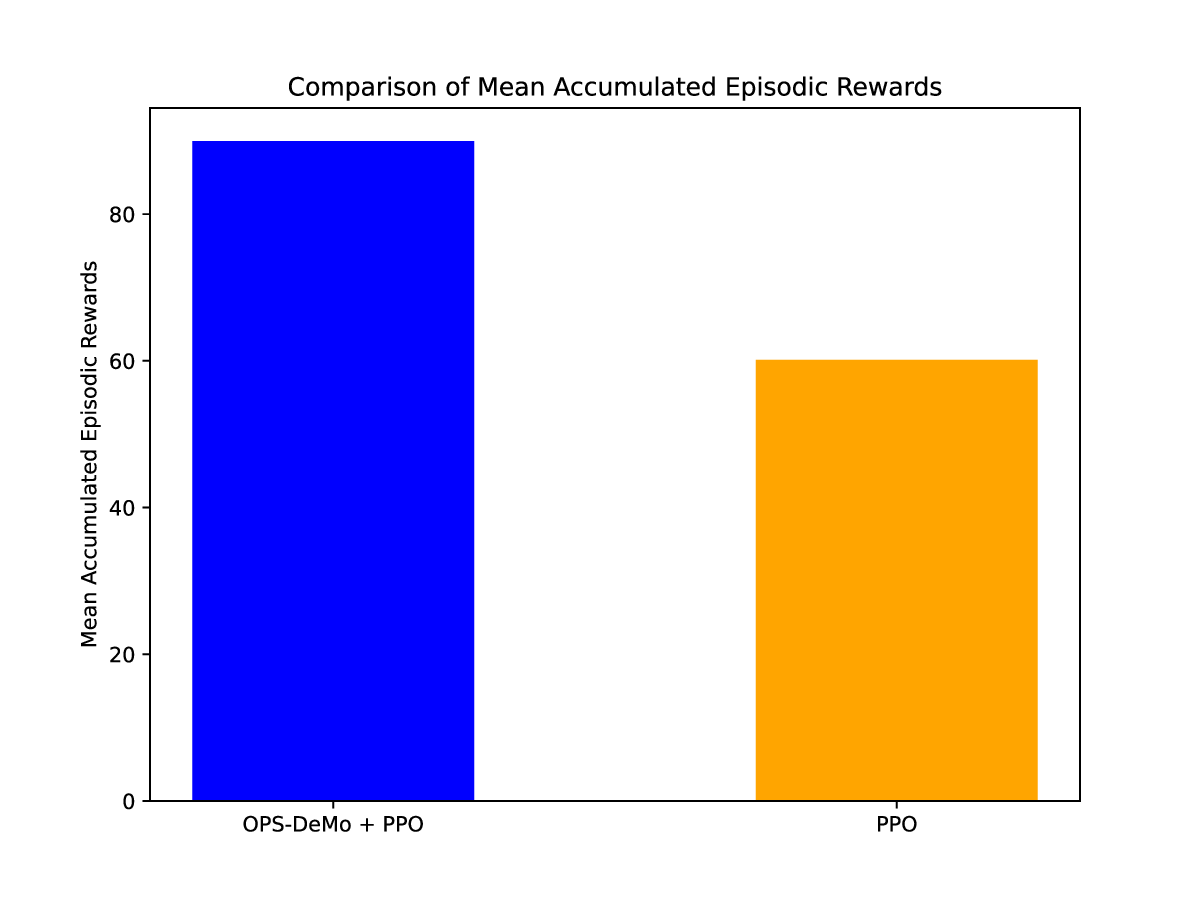}   \caption{Comparison of mean accumulated rewards per episode by Predator A, with Predator B switching its policy every $100$ timesteps, between OPS-DeMo and PPO.}
  \label{fig:reward_bar_chart}
\end{figure}

Fig.~\ref{fig:reward_bar_chart} illustrates that OPS-DeMo achieves a substantial $49.6\%$ improvement over the standalone PPO-trained model in terms of Predator A's mean episodic rewards. This improvement is attributed to OPS-DeMo's robustness in handling increased variance resulting from high uncertainty about Predator B's current policy.

\begin{table}[ht]
  \centering
  \caption{Statistical Summary of Episodic Accumulated Rewards}
  \begin{tabular}{lcc}
    \hline
    \textbf{Algorithm} & \textbf{Mean} & \textbf{Standard Deviation} \\
    \hline
    OPS-DeMo + PPO & 89.9662 & 18.7922 \\
    PPO & 60.1371 & 53.0235 \\
    \hline
  \end{tabular}
  \label{table: reward_comparison}
\end{table}

The consistency observed in OPS-DeMo's rewards, as evidenced by the lower standard deviation in Table~\ref{table: reward_comparison}, results from increased certainty about Predator B's behavior. OPS-DeMo makes informed decisions based on this certainty, contrasting with the standalone PPO-trained model, which tends to overlook recent action data from Predator B.

\section{Conclusions and Future Work} \label{sec:conclusion}
Detecting policy switches in a nonstationary multiagent environment is challenging but advantageous. It is difficult to check compliance when action distributions are uniform or data is limited. Using an error metric comparing observed and expected actions helps address this, with a decay mechanism preventing error escalation. Running error calculations for probable policies helps infer switches, enabling appropriate response policy selection. The proposed OPS-DeMo algorithm uses these methods for detection and response, outperforming standalone PPO models with more consistent rewards per episode and lower standard deviation. In the future, we are planning to work on incorporating continuous learning for more precise opponent policy estimation, developing a robust method for detecting opponent policies with uniform frequency distribution of actions alongside detecting and learning unforeseen opponent policies.

\end{document}